\newtheorem{thm}{Theorem}
\newtheorem{definition}{Definition}
\title{\LARGE \bf
Trajectory Generation for Robotic Systems with Contact Force Constraints
}
\author{Jaemin Lee$^{1}$, Efstathios Bakolas$^{2}$, and Luis Sentis$^{3*}$% <-this % stops a space
\thanks{$^{1}$J. Lee is with the Department of Mechanical Engineering and Human-centered Robotics Laboratory, The University of Texas at Austin, Austin, TX, 78712, USA
	{\tt\small jmlee87@utexas.edu}}
\thanks{$^{2}$E. Bakolas is with Faculty of the Department of Aerospace Engineering and Engineering Mechanics, The University of Texas at Austin, Austin, TX, 78712, USA
    {\tt\small bakolas@austin.utexas.edu}}%
\thanks{$^{3}$L. Sentis is with  Faculty of the Department of Aerospace Engineering and Engineering Mechanics, The University of Texas at Austin, Austin, TX, 78712, USA. *L. Sentis is the corresponding author. {\tt\small lsentis@austin.utexas.edu}}
}
\begin{document}

\maketitle
\thispagestyle{empty}
\pagestyle{empty}

%%%%%%%%%%%%%%%%%%%%%%%%%%%%%%%%%%%%%%%%%%%%%%%%%%%%%%%%%%%%%%%%%%%%%%%%%%%%%%%%
\begin{abstract}
This paper presents a trajectory generation method for contact-constrained robotic systems such as manipulators and legged robots. Contact-constrained systems are affected by the interaction forces between the robot and the environment. In turn, these forces determine and constrain state reachability of the robot parts or end effectors. Our study subdivides the trajectory generation problem and the supporting reachability analysis into tractable subproblems consisting of a sampling problem, a convex optimization problem, and a nonlinear programming problem. Our method leads to significant reduction of computational cost. The proposed approach is validated using a realistic simulated contact-constrained robotic system. 
\end{abstract}

%%%%%%%%%%%%%%%%%%%%%%%%%%%%%%%%%%%%%%%%%%%%%%%%%%%%%%%%%%%%%%%%%%%%%%%%%%%%%%%%
\section{INTRODUCTION}
We aim to control contact-constrained robotic systems using optimal control in a computationally feasible way. In the case of high dimensional systems such as legged humanoid robots, a widely used control method is to create desired trajectories without prior knowledge of their feasibility, then rely on a feedback tracking controller to instantaneously realize them by projecting the controls into the null space of the constraints 
\cite{sentis2005synthesis,mistry2012operational,escande2014hierarchical}. The drawback of this method is that the desired trajectories are often infeasible forcing the feedback controllers to fulfill them only in a least square error sense. Other methods used in humanoid robots rely on using simplified models to design feasible trajectories, e.g. using center of mass dynamics subject to contact constraints \cite{stephens2010dynamic}. However, those methods cannot guarantee the feasibility of the trajectories because the mechanical degrees-of-freedom of the robot are ignored. In contrast, our approach generates feasible trajectories that fully comply with the robot's mechanics and dynamics as well as its contact state with its environment. In order to obtain feasible trajectories for the desired goals, we broadly employ reachability analysis. In particular contact constraints need different treatment than state constraints since they need to fulfill cone formulation requirements. This type of formulation has not been employed before for optimal control of robotic systems.

Reachability analysis is often used in optimal control for analyzing the performance and safety of various types of dynamical systems with bounded uncertainty, \cite{jia2002min,bravo2006robust,gonzalez2011online,rakovic2012parameterized,subramanian2017novel}, for hybrid dynamical systems \cite{mitchell2001validating,tomlin2000game,habets2006reachability,mitchell2005time,maiga2016comprehensive}, and for stochastic systems \cite{summers2013stochastic,lesser2014reachability}. A common method to perform reachability analysis in the continuous domain is by solving the Hamilton-Jacobi-Bellman PDE \cite{kurzhanski2001dynamic,asarin2000approximate,kariotoglou2013approximate}. But methods based on this process result in exponentially increasing computational cost as a function of the system's state and the discretization step. Therefore, for robotics it is not possible to use Hamilton-Jacobi-Bellan methods due to these limitations on scalability. Another approach is to employ the logarithmic norm of a Jacobian matrix producing over-approximated bounds of the reachable sets \cite{maidens2015reachability} and checking feasibility via simulations \cite{arcak2017simulation}. 
However, this kind of method does not incorporate system constraints as we do. 

In robotics, configuration-based reachability analysis has been broadly used for motion planning of complex robotic systems. For instance, collision-free reachability maps in the configuration space are employed for the planning of humanoid motions \cite{yang2017efficient} but without addressing contact forces or robot dynamics. The Monte-Carlo method has been used to obtain piecewise end-effector position samples by exploring configuration space samples and center of mass positions \cite{guan2006reachable}, however they ignore the robot's dynamics. Another idea is to connect nodes obtained via Rapidly-Exploring Random Trees using reachability analysis \cite{shkolnik2009reachability}. However, these two last works have not been extended to dynamical systems with contact force constraints. 

% For more general reachable set, the reachable state set is sampled and saved to ``Reachability map'' offline, the nearest neighbor searching method is implemented online \cite{pendleton2017numerical}. Since the these methods consider Euclidean distance metric to find the next node, it is hard to show the reachability between two nodes of the contact-constrained systems. In contrast, we use reachability analysis to guide the optimal control process using dynamic models and contact constraints, thus, we can obtain the trajectory fulfilling the contact constraint. Additionally, the generated trajectory is optimal for each subproblem although the generated trajectory is not globally optimal.    

Fundamentally, our method generates trajectories for contact force constrained robotic systems with differential constraints. We rely on randomly generated samples \cite{boor1999gaussian,patil2012estimating,carpentier2017multi,hauser2008motion} and null-space projections associated with the differential constraints \cite{stilman2007task}. This approach is more efficient than employing Monte-Carlo methods to generate the set of samples fulfilling the system constraints. A key novelty in our sampling process is using a convex optimization stage to determine whether samples fulfill contact force constraints. We define the fraction of reachable samples (FRS) with respect to samples falling in the output regions that are feasible. The FRS, which has a dependency with the number of samples, is used as an indicator to guide the sampling process. We first solve a DP problem to obtain a candidate trajectory of the output guided by the properties of the samples. After this process we perform an additional optimal control procedure based on the reachability between the sampled points. We utilize a non-convex hull, which is the envelope of a set containing the output samples, to describe the set of reachable samples propagated from a given initial state over a finite time interval. By using small time intervals for generating trajectories between adjacent regions we make use of model approximations that significantly reduce the computational burden required for state propagation. In addition, we increase computational efficiency by propagating only the dynamics of boundary states.

% we consider intervals for ``subproblems''
% we generate many sample of the control input
% because i can go to the supermarket it is possible we will eat tonight.
% I will go to the supermarket so we can eat tonight.
% Because we can consider a small time interval for each subproblem**, it is possible to assume approximations and to generate many sample of the control input for high resolution of the reachable set. 
% Our methods are computational efficient in obtaining the reachable state sets because we reduce the number of times we update the system dynamics. 

This paper is organized as follows. We introduce notations, the state space model of the constrained robotic system, and its time-discretization in Section \ref{sec2}. In Section \ref{sec3}, we explain how to obtain the samples that fulfill all system constraints and partition the sampled space based on the fraction of feasible samples. In Section \ref{sec4}, we characterize practical reachable sets and use them to design optimal control problems to generate trajectories over short time intervals. An example problem and simulation results are shown  in Section \ref{sec5}.

\section{PRELIMINARIES}
\label{sec2}
\subsection{Notations}
The sets of real $n$-dimensional vectors and $m\times n$ matrices are denoted by $\mathbb{R}^{n}$ and $\mathbb{R}^{m\times n}$, respectively. $\mathbb{R}^{+}$ and $\mathbb{R}^{++}$ indicate the sets of non-negative and strictly positive real numbers. $\mathbb{Z}^{+}$ and $\mathbb{Z}^{++}$ represent the sets of non-negative and strictly positive integers. When considering $z_1, z_2 \in \mathbb{Z}^{+}$ where $z_{1}\leq z_2$, the interval of integers between $z_1$ and $z_2$ is represented as $\left[z_1, z_2 \right]_d$ where $d$ stands for discretization. The space of real symmetric $n\times n$ matrices is denoted by $\mathbb{S}_{n}$ and the spaces of positive semidefinite and positive definite matrices are denoted by $\mathbb{S}_{n}^{+}$ and $\mathbb{S}_{n}^{++}$, respectively. Given a matrix $A$, $A^{\dag}$ and $\ker(A)$ denote the Moore-Penrose pseudo inverse and the kernel of $A$, respectively. Given multiple matrices $A_{1}, \dots, A_{k}$ or a set $\mathcal{A} = \left\{A_{1}, \dots, A_{k} \right\}$, $\mathrm{Vertcat}\left(A_{1}, \dots, A_{k} \right)$ or $\mathrm{Vertcat}\left(\mathcal{A}\right)$ denote the matrix formed by vertically concatenating the matrices $A_{1}$ to $A_{k}$. A diagonal matrix in $\mathbb{R}^{k \times k}$ with diagonal components $a_{1}, \cdots, a_{k}$ is denoted by $\mathrm{diag}\left( a_{1}, \cdots, a_{k} \right)$. Considering a vector $a\in \mathbb{R}^{n}$,  $\|a\|$ and $\|a\|_{\infty}$ denote the $2$-norm and $\infty$-norm of the vector $a$, respectively. $\mathbb{E}[.]$ represents the probabilistic expectation operator. Given a set $\mathcal{A} \subseteq \mathbb{R}^{n}$, $\mathrm{Int}(\mathcal{A})$ and $\mathrm{Ext}(\mathcal{A})$ denote the interior and the exterior of $\mathcal{A}$. Furthermore, $\mathrm{card}\left(\mathcal{A}\right)$, $\mathrm{bd}\left( \mathcal{A} \right)$, and $\mathrm{Nconv}\left(\mathcal{A}\right)$ represent the cardinality, the boundary, and the non-convex hull of the set $\mathcal{A}$, respectively. Given two sets $\mathcal{A}_{1}$ and $\mathcal{A}_{2}$, the relative complement of $\mathcal{A}_{1}$ with respect to $\mathcal{A}_{2}$ is denoted by $\mathcal{A}_{1} \backslash \mathcal{A}_2$, that is, $\mathcal{A}_{1} \backslash \mathcal{A}_{2} \coloneqq \left\{x\in \mathcal{A}_1: x \notin \mathcal{A}_{2} \right\}$. When $\mathcal{A} \subsetneq \mathbb{R}$, $\max\left( \mathcal{A} \right)$ and $\min \left( \mathcal{A} \right)$ denote the maximum and the minimum values among the elements of the set $\mathcal{A}$. Finally, if the $k$-th derivative of the function $f$ exists and is continuous, the function is said to be of class $\mathcal{C}^{k}$.  

\subsection{State Space Model of Robotic System}
The equation of motion of a multi-body dynamical system enduring contact forces with the environments can be described as follows:
\begin{equation}
M(q) \ddot{q} + b(q,\dot{q}) + p(q) = u + J_{c}^{\top} F_{c}
\end{equation}
where $q\in \mathbb{R}^{n_q}$, $u \in \mathbb{R}^{n_u}$, $M(q)\in\mathbb{S}_{n_{q}}^{++}$, $b(q,\dot{q}) \in \mathbb{R}^{n_q}$, $p(q) \in \mathbb{R}^{n_q}$, $F_{c} \in \mathbb{R}^{n_c}$, and $J_{c} \in \mathbb{R}^{n_c \times n_q}$ are the joint variables, input commands, mass/inertia matrix, Coriolis/centrifugal force, gravitational force, contact force, and the Jacobian matrix corresponding to the contact force constraint, respectively. We can transform the above equation into state space form as
\begin{equation}
\begin{split}
	&\dot{x} = f_{x}(x) + f_u(x)u + f_{c}(x)F_{c} \\
    &f_{x}(x) = \left[\begin{array}{c} x_{2} \\ M^{-1}(x_1)\left(-b(x_1, x_2) -p(x_1) \right) \end{array} \right] \\
    &f_{u}(x) = \left[\begin{array}{c} 0 \\ M^{-1}(x_{1}) \end{array} \right], \quad f_{c}(x) = \left[ \begin{array}{c} 0 \\ J_{c}^{\top}(x_1)  \end{array}\right]
\end{split}
\label{eq:state_eq}
\end{equation}
where $x = \left[\begin{array}{cc} x_1^{\top}& x_{2}^{\top}\end{array}\right]^{\top} \in \mathbb{R}^{n_x}$, $x_1 = q$, and $x_{2}=\dot{q}$. $f_x: \mathbb{R}^{n_x} \mapsto \mathbb{R}^{n_x}$, $f_{u} :\mathbb{R}^{n_x} \mapsto \mathbb{R}^{n_x \times n_u}$, and $f_{c}: \mathbb{R}^{n_x} \mapsto \mathbb{R}^{n_x\times n_{c}}$. The joint position and velocity limits of the robot are described as the state constraints, $C_x \left(x\right) \leq 0$, and torque limits are described as input constraints, $C_{u} \left( u \right) \leq 0$. In additional, more complicated interactions such as contact wrench cones constraints, are described as mixed state-input constraints $C_{x,u}\left(x,u\right) \leq 0$. Here, the constraint functions are $C_x: \mathbb{R}^{n_x} \mapsto \mathbb{R}^{n_{C_x}}$, $C_u: \mathbb{R}^{n_u} \mapsto \mathbb{R}^{n_{C_u}}$, and $C_{x,u}: \mathbb{R}^{n_x + n_u} \mapsto \mathbb{R}^{n_{C_{xu}}}$, which are $\mathcal{C}^{1}$. We discretize the state space dynamics in (\ref{eq:state_eq}) as:
\begin{equation}
\begin{split}
    x_{k+1} &= x_{k} + {B}_{1}(x_k, u_k, F_{c,k}) \Delta t  \\
     &\quad + \frac{{B}_{2}(x_k, u_k, F_{c,k}) \left( \Delta t \right)^{2}}{2} + \mathcal{O}\left(\left( \Delta t \right)^{2} \right) \\
    &= \mathcal{F} \left(x_k, u_{k}, F_{c,k} \right)
\end{split}
\label{eq:time_discrete}
\end{equation}
where $k \in \left[0, N-1\right]_{d}$. $\Delta t$ and $\mathcal{O}\left(\left(\Delta t \right)^{2} \right)$ denote the time discretization increment and terms higher than $2$nd order in Taylor series expansion, respectively. ${B}_{1}\left( x_{k}, u_{k}, F_{c,k} \right) \coloneqq f_{x}\left(x_{k}\right) + f_u\left( x_{k} \right) u_{k} + f_{c}\left( x_{k} \right) F_{c,k}$ and ${B}_{2}\left( x_{k}, u_{k}, F_{c,k} \right) \coloneqq \frac{\partial {B}_{1}\left(x_{k}, u_{k}, F_{c,k} \right) }{\partial x} {B}_{1}(x_{k},u_{k}, F_{c,k})$. 

The output state of the robotic system is defined as $y = g(x)$ where $y \in \mathbb{R}^{n_{y}}$ and $g:\mathbb{R}^{n_{x}} \mapsto \mathbb{R}^{n_{y}}$. $g$ is continuous and differentiable. Our problem concerns the generation of feasible state and input trajectories given desired output goals and initial system states maintaining the solid contact with respect to the nonlinear system model in (\ref{eq:time_discrete}).

\section{SAMPLING-BASED APPROACH}
\label{sec3}
As we mentioned earlier, sampling based methods enable to solve complex computational problems like ours. In this section, we introduce the way to obtain the samples fulfilling the given constraints.
% In the motion planning studies for robots, sampling-based method is shown to be effective and large scalable, thus, this work employs the random sampling to select feasible states, then, properties of the obtained state samples are analyzed. Then, the entire problem is divided into small subproblems avoiding infeasible output region in this section.

% \subsection{State Samples and Sets for Considering Constraints}
\subsection{Mathematical Definitions for Sampling}
We draw random samples of the system's states from a Gaussian distribution $x \sim \mathcal{N} \left( \mu_x, \mathbf{\Sigma}_{x} \right)$ where $x\in \mathbb{R}^{n_x}$, $\mu_{x} \in \mathbb{R}^{n_x}$, and $\mathbf{\Sigma}_{x} \in \mathbb{S}_{n_x}^{++}$ denote the sampled state vector, its mean, and its covariance matrix, respectively, where $\mu_{x} \coloneqq \mathbb{E}\left[x\right]$ and $\mathbf{\Sigma}_{x} \coloneqq \mathbb{E}\left[ (x-\mu_{x})(x-\mu_{x})^{\top}\right]$. In robotics, we can define $\mu_{x}$ and $\mathbf{\Sigma}_x$ based on joint position and velocity limits. 

Given $n_{e}$ equality constraints, we describe them using the function $C_{e,[h_e]}(x) = 0$ where $h_e \in [1, n_e]_d$ is an index. This index is used to address multiple equality constraints separately. Likewise, given $n_{i}$ inequality constraints, we describe them using the function $C_{i,[h_i]}(x) \leq 0$  where $h_{i} \in [1, n_i]_d$ is also an index. $C_{e,[h_e]}: \mathbb{R}^{n_{x}} \mapsto \mathbb{R}^{n_{e,h_e}}$ and $C_{i,[h_i]}:\mathbb{R}^{n_x} \mapsto \mathbb{R}^{n_{i,h_i}}$ denote functions for the $h_e$-th equality and $h_i$-th inequality constraints, respectively, and both functions are $\mathcal{C}^{1}$ functions. Then, we define vectors of values for the equality and inequality constraint functions in terms of the state sample $x$.   
\begin{equation}
\begin{split}
\mathcal{V}_{E}(x) & \coloneqq \mathrm{Vertcat}\left(C_{e,[1]}(x), \cdots, C_{e,[n_e]}(x) \right) \\
\mathcal{V}_{I}(x) &\coloneqq \mathrm{Vertcat}\left(C_{i,[1]}(x), \cdots, C_{e,[n_i]}(x) \right)
\end{split}
\end{equation}
where $\mathcal{V}_{E}(x) \in \mathbb{R}^{\sum_{h_e=1}^{n_e}n_{h_e}}$ and $\mathcal{V}_{I}(x) \in \mathbb{R}^{\sum_{h_i=1}^{n_i}n_{h_i}}$. For dividing the indices of the constraint functions, we define two sets as follows:
\begin{equation}
\begin{split}
	H_{e}(x) &\coloneqq \left\{ h_e \in \mathbb{Z}^{++}:  \left\| C_{e,[h_e]}(x)  \right\| \leq \varepsilon_{h},  h_e\in \left[1,n_{e} \right]_{d} \right\} \\
    H_{i}(x) &\coloneqq \left\{ h_i \in \mathbb{Z}^{++}:  C_{i,[h_i]} (x) \leq 0,  h_i\in \left[1,n_{i} \right]_{d} \right\} \textrm{.}
\end{split}
\label{index}
\end{equation}
In addition, $H_{\backslash e}(x) \coloneqq [1,n_e]_d \backslash H_{e}(x)$ and $H_{\backslash i}(x) \coloneqq [1, n_{i}]_d \backslash H_{i}(x)$, respectively. To split all constraints into feasible and infeasible constraints in terms of the random sample $x$, we define the vectors whose elements are function values with respect to the index sets defined in (\ref{index}) as follows: 
\begin{equation}
\begin{split}
&\mathcal{V}_{e}(x) \coloneqq \mathrm{Vertcat} \left( C_{e,[h]}(x):\forall h \in H_{e}(x) \right) \\
&\mathcal{V}_{\backslash e}(x) \coloneqq \mathrm{Vertcat} \left( C_{e,[h]}(x):\forall h \in H_{\backslash e}(x) \right)  \\
&\mathcal{V}_{i}(x) \coloneqq \mathrm{Vertcat} \left(C_{i,[h]}(x): \forall h \in H_{i}(x) \right) \\ 
&\mathcal{V}_{\backslash i}(x) \coloneqq \mathrm{Vertcat} \left(C_{i,[h]}(x): \forall h \in H_{\backslash i}(x) \right)\textrm{.}
\end{split}
\label{feasible_sets}
\end{equation}
Since all constraint functions are differentiable, we can compute the Jacobian matrices of the constraint functions such that $J_{C_e,[h]}(x)\coloneqq \frac{\partial C_{e,[h]}}{\partial x}(x) \in \mathbb{R}^{n_{e,h}\times n_x}$ and $J_{C_i,[h]}(x)\coloneqq \frac{\partial C_{i,[h]}}{\partial x}(x) \in \mathbb{R}^{n_{i,h} \times n_x}$. In addition, we can define matrices by vertically concatenating the Jacobian matrices of the constraint functions with respect to the categorized index in (\ref{index}):   
\begin{equation}
\begin{split}
&\mathcal{J}_{e}(x) \coloneqq \mathrm{Vertcat}\left(J_{C_e,[h]}(x): \forall h \in H_{e}(x)  \right)\\
&\mathcal{J}_{\backslash e}(x) \coloneqq \mathrm{Vertcat}\left(J_{C_e,[h]}(x): \forall h \in H_{\backslash e}(x)  \right) \\
	&\mathcal{J}_{i}(x) \coloneqq \mathrm{Vertcat}\left(J_{C_i,[h]}(x): \forall h \in H_{i}(x)  \right) \\
    &\mathcal{J}_{\backslash i}(x) \coloneqq \mathrm{Vertcat}\left(J_{C_i,[h]}(x): \forall h \in H_{\backslash i}(x)  \right)
\end{split}    
\end{equation}
where all Jacobian matrices $\mathcal{J}_{e}(x)$, $\mathcal{J}_{\backslash e}(x)$, $\mathcal{J}_{i}(x)$, and $\mathcal{J}_{\backslash i}(x)$ are assumed as full row rank matrices. In order to have all sample states satisfying the state constraints, we will solve a least square error problem using the Moore-Penrose pseudo inverse.

\subsection{Update of Samples for the State Constraints}
We update the state sample for fulfilling all state constraints in the least square error sense. In addition, the orthogonal projection onto the kernel space of the constraint function is utilized to prevent modifications of the function values after fulfilling the constraints. Given a state sample $x^{l} \in \mathcal{X}$ in the $l$-th update iteration, we can compute the vertically concatenated Jacobian matrices $\mathcal{J}_{e}(x^{l})$ and $\mathcal{J}_{\backslash e}(x^{l})$. The orthogonal projection onto $\ker\left(\mathcal{J}_{e}(x)\right)$ is defined as
\begin{equation}
	P_{e}(x^{l}) \coloneqq  I_{n_x} -  \mathcal{J}_{e}^{\dag}(x^{l}) \mathcal{J}_{e}(x^{l})
    \label{eq:null_projection}
\end{equation}
where $I_{n_x}$ is the $n_x \times n_x$ identity matrix and $P_{e}: \mathbb{R}^{n_x} \mapsto \mathbb{R}^{n_x \times n_x}$. Based on the Jacobian matrix $\mathcal{J}_{\backslash e}(x^{l})$ and $P_{e}(x^{l})$, the sampled state is updated as follows:
\begin{equation}
	x^{l+1} = x^{l} - \alpha P_{e}(x^l)\left(\mathcal{J}_{\backslash e}\left(x^{l}\right) P_{e}\left( x^l \right) \right)^{\dag} \mathcal{V}_{\backslash e}(x^l)
    \label{state_update_1}
\end{equation}
where $l \in \left[0, N_{\mathrm{iter}}\right]_{d}$ and the initial state is the originally sampled state, that is, $x^0 = x$. In addition, $\alpha \in \mathbb{R}^{++}$ is the gain for regulating the convergence speed to exponentially reduce $\|\mathcal{V}_{\backslash e}\left( x^{l} \right)\|$ to $0$. When $\| \mathcal{V}_{\backslash e}\left( x^{l} \right) \| \leq \varepsilon_{e}$, the iteration process ends and the state sample $x$ is altered to the result $x^{l}$ in the set $\mathcal{X}$. If $\| \mathcal{V}_{\backslash e}\left( x^{N_{\mathrm{iter}}} \right)\| > \varepsilon_{e}$, the state sample is discarded from the set $\mathcal{X}$ for the computational efficiency of the method. 

After updating all samples in $\mathcal{X}$, we do not want to update the function values for which the constraints are already fulfilled. Therefore, an augmented Jacobian is defined with respect to the state $x^{l}$ as  
\begin{equation}
	\mathcal{J}_{aug}(x^{l}) \coloneqq \mathrm{Vertcat} \left( \mathcal{J}_{e}(x^{l}), \mathcal{J}_{i}(x^{l}) \right)
\end{equation}
where $P_{aug}(x^{l})$ is computed in the same manner in ($\ref{eq:null_projection}$). Using the Jacobian $\mathcal{J}_{\backslash i}(x^{l})$ projected onto $\mathrm{ker}\left(\mathcal{J}_{aug}(x^{l})\right)$, we can update the state sample without any modification of function values for already fulfilled constraints. The update of state sample is 
\begin{equation}
\begin{split}
	&x^{l+1} = x^{l} + \alpha P_{aug}(x^{l}) \left( \mathcal{J}_{\backslash i}(x^{l}) P_{aug}(x^{l}) \right)^{\dag} \mathcal{E}_{i}(x^{l})\\
	&\mathcal{E}_{i}(x^{l}) \coloneqq \mathcal{V}_{\backslash i}^{int}(x^{l}) - \mathcal{V}_{\backslash i}(x^{l})
\end{split}    
\end{equation} 
where $\mathcal{V}_{\backslash i}^{int}(x^{l})$ denotes the vector vertically concatenating the interior points fulfilling the constraints $C_{i,[h]}(x^{l})\leq0$ for all $h \in H_{\backslash i}(x^{l})$ . This update is terminated when the state $x^{l}$ fulfills the inequality constraints and the existing component $x$ is replaced by $x^{l}$. If the state update cannot be satisfied within $N_{\mathrm{iter}}$ iterations, the state sample is discarded from $\mathcal{X}$. Therefore, all components in the state set $\mathcal{X}$ fulfill the state constraints.        

\subsection{Sample Evaluation Given Contact Force Constraints}
\label{sec3_C}
In this section, we check both input and contact force constraints in terms of the samples in $\mathcal{X}$. Optimization techniques are broadly utilized to find the contact force for the mechanical systems, thus, we also formulate the optimization problem with quadratic cost function to obtain the contact force in terms of the samples as follows: 
\begin{subequations}
\begin{align}
	\min_{F_{c}} &\quad F_{c}^{\top} W_{c} F_{c} \\
    \textrm{s.t.} &\quad  x_{k+1} = \mathcal{F}\left(x_{k},u,F_{c}\right),  \quad x_{k}, x_{k+1} \in \mathcal{X},\\
    & \quad \mathcal{D}_{c}(x_{k})F_{c} \leq 0, C_{u} (u) \leq 0, C_{x,u}(x_{k},u) \leq 0
\end{align}
\label{eq:opt1}
\end{subequations}
where $\mathcal{D}_{c}: \mathbb{R}^{n_{x}} \mapsto \mathbb{R}^{n_{c'}\times n_{c}}$ denotes the unilateral constraints using a polyhedral approximation of the friction cone \cite{caron2015stability}. Solving this optimization problem for all states in $\mathcal{X}$, the set of feasible states in discrete state space $\mathcal{X}_{R}$ can be defined as the collection of the samples which result in the optimal contact force and input command given (\ref{eq:opt1}).
% \begin{equation}
% 	\mathcal{X}_{R} \coloneqq \left\{x_{k} \in \mathbb{R}^{n_x} : \exists F_{c}, u \textrm{ in ($\ref{eq:opt1}$) with } x_{k}, \forall x_{k} \in \mathcal{X} \right\} \textrm{.}
% \label{X_R}
% \end{equation}
% If the optimization (\ref{eq:opt1}) is infeasible for a given sample $x_k$, we discard the sample. 

% \subsection{Frequency of Reachability Index}
\subsection{Fraction of Reachable Samples}
Using the set $\mathcal{X}_{R}$, we will formulate a DP trajectory optimization problem. To that end, we define a measure called the fraction of reachable samples (FRS) regarding output regions. The output space is partitioned into $n_o$ regions as $Y_{m}\coloneqq\left\{y\in \mathbb{R}^{n_y}: \left\| y - \widetilde{y}_{m} \right\|_{\infty} \leq \delta_{m} \right\} \subsetneq \mathbb{R}^{n_y}$, where $m \in \left[1, n_o \right]_d$ and $\widetilde{y}_{m} \in \mathbb{R}^{n_y}$ denotes the center of the regions $Y_{m}$. Then, we define the subset of $\mathcal{X}_{R}$ as follows:
\begin{equation}
	\mathcal{X}_{Y_{m}} \coloneqq \left\{ x \in \mathbb{R}^{n_x} : g(x)\in \mathrm{Int}\left(Y_{m}\right), \forall x \in \mathcal{X}_{R}  \right\} \textrm{.} 
	\label{X_Y_m}
\end{equation}
Using the set $\mathcal{X}_{Y_{m}}$, the FRS quantifies how many random samples are allocated in the region $Y_{m}$.   
\begin{definition}
Suppose that $m \in [0, n_{0}]_d$ is given and that $\mathcal{X}_{R}$ is not an empty set. The Fraction of Reachable Samples (FRS) is defined as the ratio: 
\begin{equation}
\mathfrak{F}_{m|N_s} \coloneqq \frac{ \mathrm{card}\left( \mathcal{X}_{Y_{m}}\right)}{\mathrm{card}\left(\mathcal{X}_{R} \right)} \textrm{.}
\end{equation}
where $N_s$ is $\mathrm{card}(\mathcal{X}_{R})$ and $\mathcal{X}_{Y_m}$ is defined in (\ref{X_Y_m}).
\label{def1}
\end{definition}
We need sufficient number of samples so that the FRS is a reliable property. In order to check the rate of change of the FRS with respect to the number of samples, we define its gradient as follows:
\begin{equation}
	\mathcal{G}\left(\mathfrak{F}_{m|N_s}, \Delta N_{s} \right) \coloneqq \frac{\mathfrak{F}_{m|({N_{s}+\Delta N_{s}})} - \mathfrak{F}_{m|{N_s}}}{\Delta N_{s}}
    \label{eq:grad_FRI}
\end{equation}
where $\Delta N_{s}$ denotes the number of new samples generated in $\mathcal{X}_{R}$. Using the gradient of FRS, we address the convergence of the FRS with respect to $N_s$ in the following theorem:
\begin{thm}
Let $\Delta N_s \in \mathbb{Z}^{++}$ and $m \in [0,n_o]_d$ be given and $\Delta N_s$ be less than $N_s$. Then, $\mathcal{G}\left(\mathfrak{F}_{m|N_s}, \Delta N_s \right)$ will converge to zero as $N_s \rightarrow \infty$, that is, 
\begin{equation}
	\lim_{N_{s} \rightarrow \infty} \mathcal{G}\left(\mathfrak{F}_{m|N_s}, \Delta N_s \right) = 0
\end{equation}
In other words, $\lim_{N_s \rightarrow \infty} \mathfrak{F}_{m|N_s} = \overline{\mathfrak{F}}_{m}$ where $\overline{\mathfrak{F}}_{m}$ represents a convergence value for the FRS.
\label{thm1}
\end{thm}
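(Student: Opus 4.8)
The plan is to recognize $\mathfrak{F}_{m|N_s}$ as an empirical proportion and to settle the statement with a law-of-large-numbers argument together with one elementary bound. Enumerate the feasible samples collected in $\mathcal{X}_R$ as $x_1, x_2, \dots$ and, for each, introduce the Bernoulli indicator $Z_j \coloneqq \mathbf{1}\left[ g(x_j) \in \mathrm{Int}(Y_m) \right]$, so that $\mathrm{card}(\mathcal{X}_{Y_m}) = \sum_{j=1}^{N_s} Z_j$ and hence
\begin{equation}
\mathfrak{F}_{m|N_s} = \frac{1}{N_s}\sum_{j=1}^{N_s} Z_j .
\end{equation}
Each original state is drawn independently from $\mathcal{N}(\mu_x, \mathbf{\Sigma}_x)$, the constraint-satisfaction update of Section \ref{sec3} is a deterministic map, and a sample is retained in $\mathcal{X}_R$ or discarded solely on the basis of that single sample; hence the retained samples are i.i.d.\ draws from the conditional law of $x$ given $x \in \mathcal{X}_R$, and the indicators $\{Z_j\}$ are i.i.d.\ Bernoulli with common mean $p_m \coloneqq \mathbb{E}[Z_j] = \mathbb{P}\!\left( g(x) \in \mathrm{Int}(Y_m) \,\middle|\, x \in \mathcal{X}_R \right)$.

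First I would establish the convergence of the FRS itself. By the strong law of large numbers applied to $\{Z_j\}$, the sample average $\mathfrak{F}_{m|N_s}$ converges almost surely to its mean as $N_s \to \infty$; that is, $\mathfrak{F}_{m|N_s} \to p_m$, and we set $\overline{\mathfrak{F}}_m \coloneqq p_m$. This is the limit asserted in the second half of the statement, and its existence requires only $\mathbb{P}(x \in \mathcal{X}_R) > 0$, which is guaranteed by the standing hypothesis that $\mathcal{X}_R$ is nonempty.

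The gradient claim then follows in two complementary ways. Probabilistically, for fixed $\Delta N_s$ both $\mathfrak{F}_{m|(N_s+\Delta N_s)}$ and $\mathfrak{F}_{m|N_s}$ are partial averages of the same i.i.d.\ sequence and share the limit $\overline{\mathfrak{F}}_m$, so their difference tends to zero and, divided by the fixed positive integer $\Delta N_s$, yields $\lim_{N_s\to\infty}\mathcal{G}(\mathfrak{F}_{m|N_s},\Delta N_s)=0$. I would also record a sharper, purely deterministic bound that makes the gradient statement independent of any probabilistic assumption: writing $S_n \coloneqq \sum_{j=1}^n Z_j \in [0,n]$ and $T \coloneqq S_{N_s+\Delta N_s}-S_{N_s} \in [0,\Delta N_s]$, expanding the difference quotient gives
\begin{equation}
\left| \mathfrak{F}_{m|(N_s+\Delta N_s)} - \mathfrak{F}_{m|N_s} \right|
= \frac{\left| N_s\, T - S_{N_s}\,\Delta N_s \right|}{N_s\,(N_s+\Delta N_s)} \leq \frac{\Delta N_s}{N_s+\Delta N_s},
\end{equation}
whence $|\mathcal{G}| \leq (N_s+\Delta N_s)^{-1} \to 0$.

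The main obstacle is the probabilistic modeling rather than any calculation. Two points demand care. First, one must justify that the accept/discard mechanism together with the deterministic projection updates preserves both independence and identical distribution of the retained samples; this is where the argument genuinely rests, and it relies on the fact that retention of $x_j$ depends on $x_j$ alone. Second, I would stress that the gradient-to-zero conclusion does \emph{not} by itself imply convergence of $\mathfrak{F}_{m|N_s}$ — a bounded sequence with vanishing consecutive differences can still fail to converge — so the ``in other words'' equivalence in the statement must be routed through the strong law of large numbers, which supplies the genuine limit $\overline{\mathfrak{F}}_m$ and, as a by-product, the vanishing of $\mathcal{G}$.
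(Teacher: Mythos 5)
Your proposal is correct, and it takes a genuinely stronger route than the paper. The paper's own proof is purely deterministic and proves only the first claim: it considers the extreme case in which all $\Delta N_s$ new samples land in $Y_m$, bounds the gradient as $|\mathcal{G}(\mathfrak{F}_{m|N_s},\Delta N_s)| \leq \bigl|N_s - N_m\bigr| / \bigl(N_s(N_s+\Delta N_s)\bigr)$, and lets $N_s \rightarrow \infty$ (invoking, somewhat superfluously, $\Delta N_s, N_m \ll N_s$); the ``in other words'' convergence $\mathfrak{F}_{m|N_s} \rightarrow \overline{\mathfrak{F}}_m$ is then simply asserted, not proved. Your deterministic bound $|\mathcal{G}| \leq (N_s+\Delta N_s)^{-1}$ is essentially the paper's counting estimate done more carefully: since $0 \leq T \leq \Delta N_s$ and $0 \leq S_{N_s} \leq N_s$, it covers both extremes (all or none of the new samples in $Y_m$) symmetrically and needs no asymptotic assumption on $N_m$, whereas the paper's ``maximum rate of change'' case analysis is one-sided. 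The law-of-large-numbers layer is your genuine addition, and it repairs exactly the logical gap you identify: vanishing consecutive differences of a bounded sequence do not imply convergence (e.g., partial sums of $1/n$ normalized appropriately, or $\sin(\log N_s)$), so the paper's argument cannot deliver the second claim at all, while your SLLN argument does — making $\overline{\mathfrak{F}}_m = p_m$ a well-defined limit rather than a notational convention. What this buys comes at a price the paper never pays: you must posit that the retained samples in $\mathcal{X}_R$ are i.i.d.\ draws from the conditional law of $x$ given retention, which requires that the projection updates of Section \ref{sec3} are deterministic maps of each raw draw alone, that accept/discard depends only on that draw, and that the acceptance probability is positive; this is consistent with the pipeline the paper describes but is an added modeling hypothesis, and you are right to flag it as the point on which the probabilistic half of the argument genuinely rests.
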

\begin{proof}
Suppose $\mathrm{card}(\mathcal{X}_{R}) = N_s$ and $\mathrm{card}(\mathcal{X}_{Y_m}) = N_m$. We consider a case in which all new samples are allocated within $Y_{m}$, which means the rate of change of the FRS is maximum. Hence, $|\mathcal{G}(\mathfrak{F}_{m|N_s},\Delta N_s)|$ is bounded such that
\begin{equation*}
\begin{split}
	&|\mathcal{G}\left( \mathfrak{F}_{m|N_s}, \Delta N_s\right)| \leq \left| \frac{N_{s} \mathfrak{F}_{m|N_{s}} + \Delta N_{s}}{\Delta N_{s} \left(  N_s + \Delta N_s \right)} - \frac{\mathfrak{F}_{m|N_{s}}}{\Delta N_s} \right|  \\
    &= \left| \frac{N_{s}\Delta N_{s} - N_{m} \Delta N_{s} }{N_{s} \Delta N_s \left( N_{s} + \Delta N_{s}  \right)} \right| = \left|\frac{N_{s} - N_{m} }{N_{s} \left( N_{s} + \Delta N_{s}  \right)} \right|
\end{split}
\end{equation*}
Since $\Delta N_s, N_m \ll N_{s}$,
\begin{equation*}
\begin{split}
 &0 \leq \lim_{N_s \rightarrow \infty} \left| \mathcal{G}\left( \mathfrak{F}_{m|N_s}, \Delta N_s\right) \right|  \leq \lim_{N_s \rightarrow \infty} \left|\frac{N_{s} - N_{m} }{N_{s} \left( N_{s} + \Delta N_{s}  \right)} \right| \\
 &=\lim_{N_s \rightarrow \infty} \left| \frac{ 1 - N_{m}/N_{s} }{ N_{s} + \Delta N_{s} } \right| = \lim_{N_s \rightarrow \infty} \left| \frac{1}{N_s} \right| = 0
\end{split}
\end{equation*}
It is therefore implied that $\lim_{N_s\rightarrow \infty} \mathcal{G}\left( \mathfrak{F}_{m|N_s}, \Delta N_s\right)=0$.
\end{proof}
For numerical implementation $N_s$ is determined by using the inequality $\max\left( G_{N_s} \right) \leq \varepsilon_{G}$, where $G(N_s) \coloneqq \left\{ n_{\mathcal{G}} \in \mathbb{R}^{+}: n_{\mathcal{G}} = | \mathcal{G}\left(\mathfrak{F}_{m|N_s}, \Delta N_s \right)|, m \in [1, n_{o}]_d  \right\}$ and $\varepsilon_{G}$ is a pre-defined small positive threshold.

Let us consider the set of output samples:
\begin{equation}
	\mathcal{Y}_{m} \coloneqq \left\{ y \in \mathbb{R}^{n_y}: y = g(x), \forall x \in \mathcal{X}_{Y_m}  \right\} 
\end{equation}
The mean vector and the covariance matrix of samples $y\in \mathcal{Y}_m$ are represented by $\mu_{y_{m}} \coloneqq \mathbb{E}[y]$ and $\mathbf{\Sigma}_{\mathcal{Y}_{m}} \coloneqq \mathbb{E} \left[ (y -\mu_{y_m}) (y-\mu_{y_m})^{\top} \right]$, respectively. To analyze the patterns of the output samples, the covariance matrix $\mathbf{\Sigma}_{\mathcal{Y}_m}$ is decomposed using the singular vale decomposition such that 
\begin{equation}
    \mathbf{\Sigma}_{\mathcal{Y}_{m}} = U_{m} \Omega_{m} V_{m}^{\top}
\end{equation}
where $U_{m}$ and $V_{m}$ are unitary matrices in $\mathbb{R}^{n_y \times n_y}$ and $\Omega_{m}$ represents a diagonal singular value matrix in $\mathbb{R}^{n_y \times n_y}$. We next define a vector that is associated with the largest singular value, to find the principle direction of the distribution of samples in $\mathcal{Y}_{m}$. 
\begin{definition}
Let $V_{m}=[\mathrm{col}_{1}\left(V_{m}\right),\dots,\mathrm{col}_{n_y}\left(V_{m}\right)]$ and $\Omega_{m} = \mathrm{diag}\left(\sigma_{m,1}, \dots, \sigma_{m,n_y}\right)$ where $\sigma_{m,i}$ denotes the singular values of  $\mathbf{\Sigma}_{\mathcal{Y}_{m}}$. 
The principal singular vector (PSV) of $\mathcal{Y}_{m}$ is defined as the singular vector corresponding to the largest singular value such that 
\begin{equation}
\mathfrak{P}_{m} \coloneqq \mathrm{col}_{j}\left( V_m \right), \quad \sigma_{max} = \sigma_{m,j}
\end{equation}
where $\mathfrak{P}_{m} \in \mathbb{R}^{n_y}$ and $\sigma_{max}$ denotes the maximum singular value of $\mathbf{\Sigma}_{\mathcal{Y}_m}$.
\end{definition}
The PSV $\mathfrak{P}_{m}$ is a critical property for constructing the transition dynamics of the DP process. 

\subsection{Dynamic Programming based on Sample Properties}
\label{sec:DP}
After computing the FRS, $\overline{\mathfrak{F}}_{m}$, and the PSV, $\mathfrak{P}_{m}$, with sufficient samples, we formulate the DP problem using a Markov Decision Process (MDP) to create an end-to-end trajectory. To start with, we define a discrete node associated with the output region $Y_{m}$ as follows:
\begin{equation}
	s_{m} \coloneqq \mathrm{node} \left( Y_{m} \right), m \in [1,n_o]_d \textrm{.}
\end{equation}
For DP, we represent the FRS and the PSV corresponding to the node index $s_m$ as $\mathfrak{F}_{s_m} = \overline{\mathfrak{F}}_{m}$ and $\mathfrak{P}_{s_m} = \mathfrak{P}_{m}$. In addition, we employ value iteration to solve the DP using the Bellman equation as follows:
\begin{equation*}
\begin{split}
&\mathfrak{D}^{\star} (s^{l}) = \max_{a} \left(  \mathfrak{R} \left( s^{l} \right) + \gamma \sum_{s^{l+1} \in S} \mathfrak{T}_{a} \left( s^{l},s^{l+1} \right)  \mathfrak{D}^{\star}\left( s^{l+1} \right) \right) 
\end{split}
\end{equation*}
where $s^{l}\in S$, $a$, $\mathfrak{R}$, $\mathfrak{T}_{a}$, $\gamma$, and $S$ are the node in the $l$-th iteration of the algorithm, the action, the reward, the transition dynamics, the discount factor, and the set of nodes, respectively. The reward function is defined to include as many samples as possible for the result of the DP:
\begin{equation}
\begin{split}
    \mathfrak{R}(s^{l}) = \left\{\begin{array}{ll} -\eta_{1} & \textrm{ if } \mathfrak{F}_{s^{l}} = 0 \\
    \eta_{2} + K_{\mathfrak{F}} \mathfrak{F}_{s^{l}} & \textrm{ if } s^{l} = s_{\phi} \\
    - \eta_{3} + K_{\mathfrak{F}} \mathfrak{F}_{s^{l}} & \textrm{ else } 
    \end{array}  \right.
\end{split}
\end{equation}
where $s_{\phi}$ denotes the node associated with the region containing the goal output, $\phi$. $K_{\mathfrak{F}} \in \mathbb{R}^{++}$ and $\eta_{1,2,3} \in \mathbb{R}^{++}$ are the gain for $\mathfrak{F}_{s^{l}}$ and the offset values for the reward functions, respectively.

If the PSV of $\mathbf{\Sigma}_{\mathcal{Y}_{m}}$ associated with the node $s_{m}$ is not well defined, e.g., when the distribution of samples is isometric or uniform, the transition dynamics of the DP process is considered deterministic. Otherwise, the transition dynamics is computed by the direction cosine between the PSV $\mathfrak{P}_{s^{l}}$ and an action vector $\pi(a) \in \mathbb{R}^{n_y}$ such that
\begin{equation}
\Pi\left(s^{l}, s^{l+1},a \right) \coloneqq \max\left(\left\{0, \frac{\mathfrak{P}_{s^{l}}^{\top}\pi(a)}{\left|\mathfrak{P}_{s^l} \right|\left| \pi(a) \right|}   \right\} \right) \textrm{.} 
\end{equation} 
Then, the transition dynamics is obtained by normalizing the direction cosine as follows:
\begin{equation}
    \mathfrak{T}_{a} \left(s^{l}, s^{l+1} \right) = \frac{\Pi \left( s^{l}, s^{l+1}, a \right)}{\sum_{\hat{s} \in \hat{S} }\Pi \left( s^{l}, \hat{s}, a \right)}
    \label{eq:transition_dyn}
\end{equation}
where $\hat{s}$ is an individual neighboring node of $s^{l}$, and $\hat{S}$ is the collection of all neighboring nodes, respectively. The reward and transition dynamics are designed to exclude infeasible output regions ($\mathfrak{F}_{s_m} =0$). 

Using DP, we obtain a sequential set of nodes $\mathcal{S}^{\star} \coloneqq \left\{ s^{1}, s^{2}, \cdots, s^{n_{dp}} \right\}$. The set is converted to $\mathcal{Q}^{\star} \coloneqq \left\{\mathrm{node}^{-1}(s^{1}), \mathrm{node}^{-1}(s^{2}), \cdots, \mathrm{node}^{-1}(s^{n_{dp}}) \right\}$ where $\mathrm{node}^{-1}(s_m)$ denotes the mapping of the node $s_m$ to its corresponding output region $Y_m$. This implies that the end-to-end trajectory generation problem can be formulated as a trajectory generation problems between the output regions in $\mathcal{Q}^{\star}$.

\section{TRAJECTORY GENERATION VIA REACHABILITY ANALYSIS}
\label{sec4}
Given the resulting sequence of output regions $\mathcal{Q}^{\star}$ from the DP, we now generate feasible trajectories between output regions connecting $\mathrm{node}^{-1}(s^{l})$ to $\mathrm{node}^{-1}(s^{l+1})$ for all $l\in [1,n_{dp}-1]_d$. After generating trajectories for all $l\in [1,n_{dp}-1]_d$, an entire end-to-end trajectory can be generated with feasibility guarantees.

\subsection{Reachability Analysis}
As discussed above, we seek to solve a nonlinear optimization problem for the sequence of output regions. The nonlinear optimization strategy requires a feasible initial condition so that the solution can converge to the local optimal point. For this reason, we compute a reachable set fulfilling the constraints given an initial state. The reachable set is defined for a continuous system with the contact force constraint as: 
\begin{definition}
Given an initial state $x_{0}\in \mathbb{R}^{n_x}$ and a time instance $t$, the reachable state set of the robotic system given a contact force constraint is defined as:
\begin{equation}
\begin{split}
	&\mathcal{R}_{t}^{x} \left(x_{0} \right) \coloneqq \{  z \in \mathbb{R}^{n_x} : z =x(t), \exists u([t_0, t]),  \\
    & \exists F_{c}([t_{0}, t]), C_{x} \left(  x(t) \right)\leq 0, C_{u}\left( u(t) \right) \leq 0,    \\
    & C_{x,u} \left(x(t), u(t) \right) \leq 0, \mathcal{D}\left( x(t) \right)F_c(t) \leq 0,  x(0) = x_0, \\
    & t \in [ t_{0}, t], \dot{x} = f_x(x) + f_u(x) u + f_c(x)F_c \}.
\end{split}\label{eq:gais}
\end{equation}
\end{definition}
This reachable state set can be extended over a time interval $[t_i,t_{i+1}]$ as follows:
\begin{equation}
\begin{split}
    \mathcal{R}^{x}_{[t_i, t_{i+1}]}\left( x_{0} \right) \coloneqq \bigcup_{t \in [t_i, t_{i+1}]} \mathcal{R}^{x}_{t} (x_{0}) \textrm{.}
\end{split}
\end{equation}
Using the reachable set for the states, the reachable set for the outputs is defined as:
\begin{equation}
	\mathcal{R}_{t}^{y}(x_0) \coloneqq \left\{\nu \in \mathbb{R}^{n_y}: \nu = g(\overline{x}), \forall \overline{x} \in \mathcal{R}_{t}^{x} (x_0) \right\} 
\label{eq:reach_x}
\end{equation}
which can be extended to the time interval $[t_i,t_{i+1}]$ as:
\begin{equation}
	\mathcal{R}_{[t_i, t_{i+1}]}^{y} (x_0) \coloneqq  \bigcup_{t \in [t_i, t_{i+1}]} \mathcal{R}_{t}^{y}(x_0) \textrm{.}
\end{equation}
Since this paper considers the discrete state space model coupled with a sampling-based approach, we approximate the reachable set, e.g. Eq.~(\ref{eq:gais}), with a discrete state space model.
 
Before computing the reachable sets, we check the state bounds using the discrete state space model (\ref{eq:time_discrete}): 
\begin{equation}
\begin{split}
 	&\left\|  x_{k+1} - x_{k} \right\| = T \left\|   B_{1}  + \frac{T}{2}  B_{2}  +  \frac{\mathcal{O} \left( T^{2} \right)}{T} \right\| \\
 	&\leq T \left\| \left( I + \mathcal{Z}_{1}\right)\right\| \left\| B_{1}  \right\| + K|T| = \mathcal{Z}_{2}(T,x_{k})
 \end{split}
 \label{eq:inequality01}
\end{equation}
where $\mathcal{Z}_{1} \coloneqq J_x(x_k) + J_u(x_k)u_{k} + J_{\overline{c}}(x_k)F_{c,k}$, $J_{x}(x_{k}) = \frac{\partial f_{x}}{\partial x}(x_{k})$, $J_{u}(x_{k}) = \frac{\partial f_{u}}{\partial x}(x_k)$, and $J_{\overline{c}}(x_k) = \frac{\partial f_c}{\partial x}(x_{k})$. $T$ is the time increment and it should be small satisfying $\mathcal{O}\left( T^{2} \right) < K|T|$. Also, the norm of the output update is bounded by: 
\begin{equation}
\begin{split}
	\left\| y_{k+1} - y_{k} \right\| &= \left\| J_{y}(x_{k}) \left( x_{k+1} - x_{k} \right) \right\| \\
    &\leq \left\| J_{y}(x_{k}) \right\| \left\| x_{k+1} - x_{k} \right\| = \mathcal{Z}_{3}(T,x_{k})
\end{split}
\label{ineq:output}
\end{equation}
where $J_{y}(x_{k})=\frac{\partial g}{\partial x}(x_{k})$. Based on (\ref{ineq:output}), we define the closed ball in the output space as follows:
\begin{equation}
	\mathcal{B}^{y}\left(T, x_{0} \right) \coloneqq \left\{ y \in \mathbb{R}^{n_y} : \left\| y - g(x_0) \right\| \leq \mathcal{Z}_{3}(T,x_0)  \right\} \textrm{.}
	\label{eq:norm_ball}
\end{equation}
Since the reachable output set is a subset of $\mathcal{B}^{y}(T,x_0)$, it is necessary to consider a time interval wider than $[0,T_{\backslash min}]$ in the reachability analysis, where $T_{\backslash min} \coloneqq \min(\{ t : t=k\Delta t, \phi \in \mathcal{B}^{y}(t,x_0),k \in \mathbb{Z}^{+} \})$. 

The reachable set is numerically constructed using the discrete state space model. To start, we formulate the optimization problem with the state, $x_{k}$, and input, $u$:
\begin{subequations}\label{opt3}
\begin{align}
	\min_{F_{c},x_{k+1}} &\quad F_{c}^{\top} W_{c} F_{c} \\
    \textrm{s.t.} &\quad  x_{k+1} = \mathcal{F}\left(x_{k},u,F_{c}\right), \mathcal{D}_{c}(x_k)F_{c} \leq 0,\\
    &\quad C_{x,u}(x_{k},u) \leq 0, C_{x}(x_{k+1}) \leq 0 \textrm{.}
\end{align}
\end{subequations}
Let us consider an initial state, $x_{k} = x_0$. We draw random input samples from a Gaussian distribution $u \sim \mathcal{N}(\mu_u, \mathbf{\Sigma}_{u})$ where $\mu_{u} \coloneqq \mathbb{E}(u)$ and $\mathbf{\Sigma}_{u} \coloneqq \mathbb{E}[(u - \mu_{u})(u -\mu_{u})^{\top}]$ denote the mean vector and the covariance matrix of the input samples, respectively. For all of the generated input samples, we only select samples that fulfill the input constraints.

Via our numerical strategy, the reachable state set at $\Delta t$ is approximated as $\overline{\mathcal{R}}_{\Delta t}^{x} (x_0)$ in the form of the collection $x_{1}$ from the optimization (\ref{opt3}) for all input samples. The reachable set of the output samples is computed as: 
\begin{equation}
\overline{\mathcal{R}}^{y}_{\Delta t} \coloneqq \{ \nu \in \mathbb{R}^{n_y} : \nu = g(\overline{x}), \overline{x} \in \overline{\mathcal{R}}^{x}_{\Delta t}(x_0) \}  \textrm{.}
\end{equation}
We can extend this method to obtain the reachable set over multiple time steps. Suppose a time step, $T_{k} = k \Delta t$, and a reachable set for the previous time step, $\overline{\mathcal{R}}^{x}_{0,T_{k-1}}(x_0)$. Our strategy is to solve the optimization problem in (\ref{opt3}) with respect to the samples in the set $\{\overline{x} : g(\overline{x}) \in \mathrm{bd}(\overline{\mathcal{R}}_{[0,T_{k-1}]}^{y}(x_0))\}$ instead of with respect to $x_{k}$. Then the computation of the reachable set at $T_{k}$, $\overline{\mathcal{R}}_{T_{k}}^{x}(x_0)$ results in a collection of $x_{k+1}$ for all $\overline{x}$ and $u$.
Based on $\overline{\mathcal{R}}_{[0,T_{k-1}]}^{x}(x_0)$ and $\overline{\mathcal{R}}_{T_{k}}^{x}(x_0)$, we can obtain the reachable state set over the time horizon $[0,T_{k}]$ and the corresponding output set as:
\begin{subequations}
\begin{align}
&\overline{\mathcal{R}}^{x}_{[0,T_{k}]}(x_{0}) = \overline{\mathcal{R}}^{x}_{[0,T_{k-1}]}(x_{0}) \bigcup\overline{\mathcal{R}}^{x}_{T_{k}}(x_{0})\\
&\overline{\mathcal{R}}^{y}_{[0,T_{k}]}(x_0) = \{ \nu \in \mathbb{R}^{n_y} : \nu = g(\overline{x}), \overline{x} \in \overline{\mathcal{R}}^{x}_{[0,T_{k}]}(x_0) \}
\label{reach_y_time}
\end{align}
\end{subequations}
where $k \geq 2$, $\overline{\mathcal{R}}^{x}_{[0,T_{1}]}(x_0) = \overline{\mathcal{R}}^{x}_{\Delta t}(x_0)$, and $\overline{\mathcal{R}}^{y}_{[0,T_{1}]}(x_0) = \overline{\mathcal{R}}^{y}_{\Delta t}(x_0)$. In terms of the computational efficiency, the proposed strategy can reduce the computational cost to $\mathcal{O}(N_{b}^{N_{u}})$, where $N_b$ and $N_u$ denote the number of samples at the boundary of the reachable set and the number of input samples, respectively.   

As mentioned above, we must check whether the desired goal belongs to the reachable set of output samples before solving the optimal control problem given an initial state. We first approximate $\overline{\mathcal{R}}^{x}_{[0,T]}(x_0)$ by a non-convex hull. We then check whether the desired goal output $y_{\phi}$ is reachable or not: $	y_{\phi} \in \mathrm{Nconv}\left(\overline{\mathcal{R}}^{y}_{[0,T]}(x_0)\right)$, where $\overline{\mathcal{R}}^{y}_{[0,T]}(x_0)$ is addressed in  (\ref{reach_y_time}).
If we find a $T$ satisfying $y_{\phi} \in \mathrm{Nconv}\left(\overline{\mathcal{R}}^{y}_{[0,T]}(x_0)\right)$, then we can conclude that it implies that the desired goal output position $y_{\phi}$ is achievable over the time interval $[0,t]$ where $t\geq T \geq T_{\backslash min}$. 

\subsection{Optimal Control}
\label{sec_optcont}
To generate the whole trajectory, we recursively solve the optimal control problem between $\mathrm{node}^{-1}(s^{l})$ and $\mathrm{node}^{-1}(s^{l+1})$ for all $l\in [1,n_{dp}-1]_d$ in $\mathcal{Q}^{\star}$, as defined in Section \ref{sec3}. To begin the optimal control process, the time interval $T^{l}$ and the desired output $\widetilde{y}^{l+1}$ for the $l$-th optimal control problem are determined through the previous reachability analysis and initial state, $x_0^{l}$. We also define a performance measure of the optimal control problem in the discrete time domain as follows: 
\begin{equation*}
\begin{split}
&\mathcal{L}^{l}(u(.),N^{l} ) \coloneqq \sum_{k=0}^{N^{l}-1} \left(u_{k}^{\top} W_{1} u_{k} + F_{c,k}^{\top}W_{2} F_{c,k}\right) + L^{l}\left( \xi_N \right) \\
&L^{l}\left( \xi_{N} \right) \coloneqq \left(\widetilde{y}^{l+1} - \xi_{N} \right)^{\top} W_{3} \left( \widetilde{y}^{l+1} -\xi_{N} \right)
\end{split}
\end{equation*} 
where $\xi$ denotes the trajectory of the output and $N^{l} = T^{l}/\Delta t$. $W_{1}\in \mathbb{S}_{n_u}^{++}$, $W_{2} \in \mathbb{S}_{n_c}^{++}$, and $W_3 \in \mathbb{S}_{n_y}^{++}$ are the weighting matrices for the components of the performance measure. 

The optimal control problem is formulated using the reachable sets, the constraints, and the discrete state space model as follows:
\begin{subequations}\label{opt2}
\begin{align}
    \min_{\zeta(.), u(.)}& \quad \mathcal{L}^{l}(u(.), N^{l}) \\
    \textrm{s.t.}& \quad \xi_{k} \in \overline{\mathcal{R}}^{y}_{[0,T^{l}]}(x_0^{l}) \\
    & \quad \zeta_{k} \in  \overline{\mathcal{R}}^{x}_{[0,T^{l}]}(x_0^{l}) \\
    & \quad \zeta_{k+1} = \mathcal{F}\left(\zeta_{k}, u_{k}, F_{c,k}  \right) \\
    & \quad  C_{x} \left( \zeta_{k}\right) \leq 0, C_{u}\left( u_{k} \right) \leq 0,  \\
    & \quad C_{x,u}\left(\zeta_{k}, u_{k} \right) \leq 0,\mathcal{D}(\zeta_{k})F_{c,k} \leq 0, \\
    & \quad \zeta_{0} =x_{0}^{l}, \xi_{0} = g(x_{0}^{l})
\end{align} 
\end{subequations}
where $\zeta$ denotes the state trajectory. As a result of the optimal control problem, we obtain the state trajectory to control the system output from region $\mathrm{node}^{-1}(s^{l})$ to region $\mathrm{node}^{-1}(s^{l+1})$ in $\mathcal{Q}^{\star}$:
\begin{equation}
\Psi_{l} \coloneqq  \mathrm{Vertcat}\left( \zeta_{k}^{\top} : \forall k \in [0, N^{l}]_d  \right) \textrm{.}
\end{equation}
This process is sequentially implemented for all pairs $(\mathrm{node}^{-1}(s^{l}),\mathrm{node}^{-1}(s^{l+1}))$ in $\mathcal{Q}^{\star}$ by replacing the initial state $x_0^{l}$ with the last component of $\Psi_{l-1}$ when $l\neq 1$. In particular, we set $x_{0}^{1} = x_0$ and $\widetilde{y}^{n_{dp}} = \phi$. Note that all initial states in the optimal control problem are feasible due to the constraints (34b) and (34c). After solving for all trajectories $\Psi_{1}, \dots, \Psi_{n_{dp} -1}$, we obtain the start-to-end state trajectory by connecting the individual trajectories such that: 
\begin{equation}
	\psi = \mathrm{Vertcat} \left(\Psi_{1}, \dots, \Psi_{n_{dp} -1}\right) \textrm{.}
\end{equation}
The full state trajectory and the corresponding input are able to control the robotic system, fulfilling not only the required differential constraints but also the contact force constraints.  
\begin{figure*}
\centering
\begin{minipage}[t]{0.28\linewidth}
\includegraphics[width=\linewidth, valign=c]{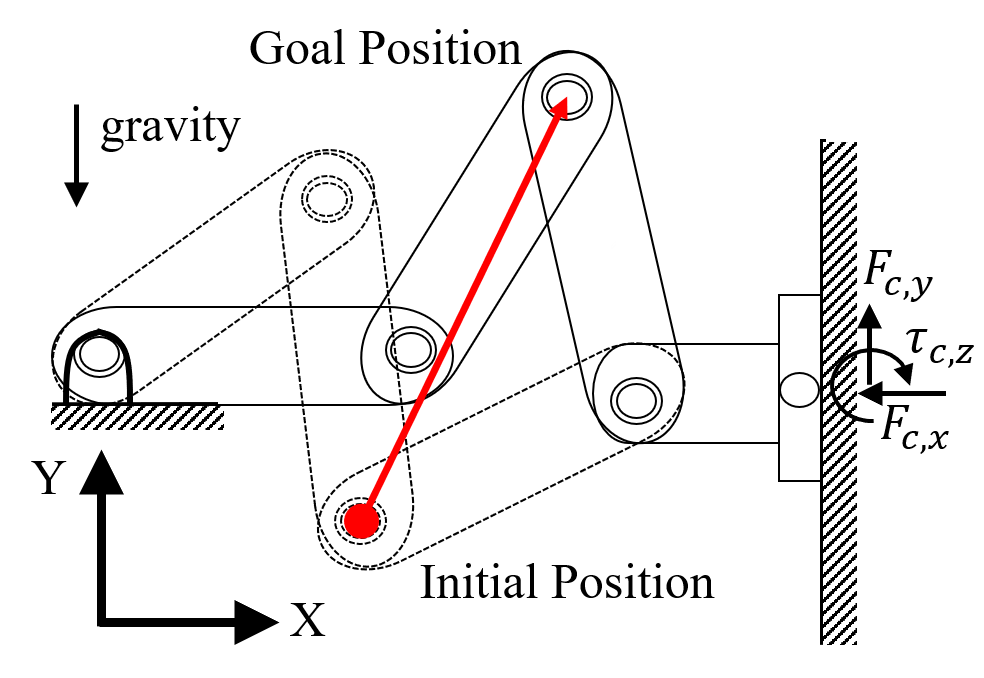}
\centerline{\footnotesize{(a)}}
\end{minipage}
\begin{minipage}[t]{0.39\linewidth}
\includegraphics[width=\linewidth,valign=c]{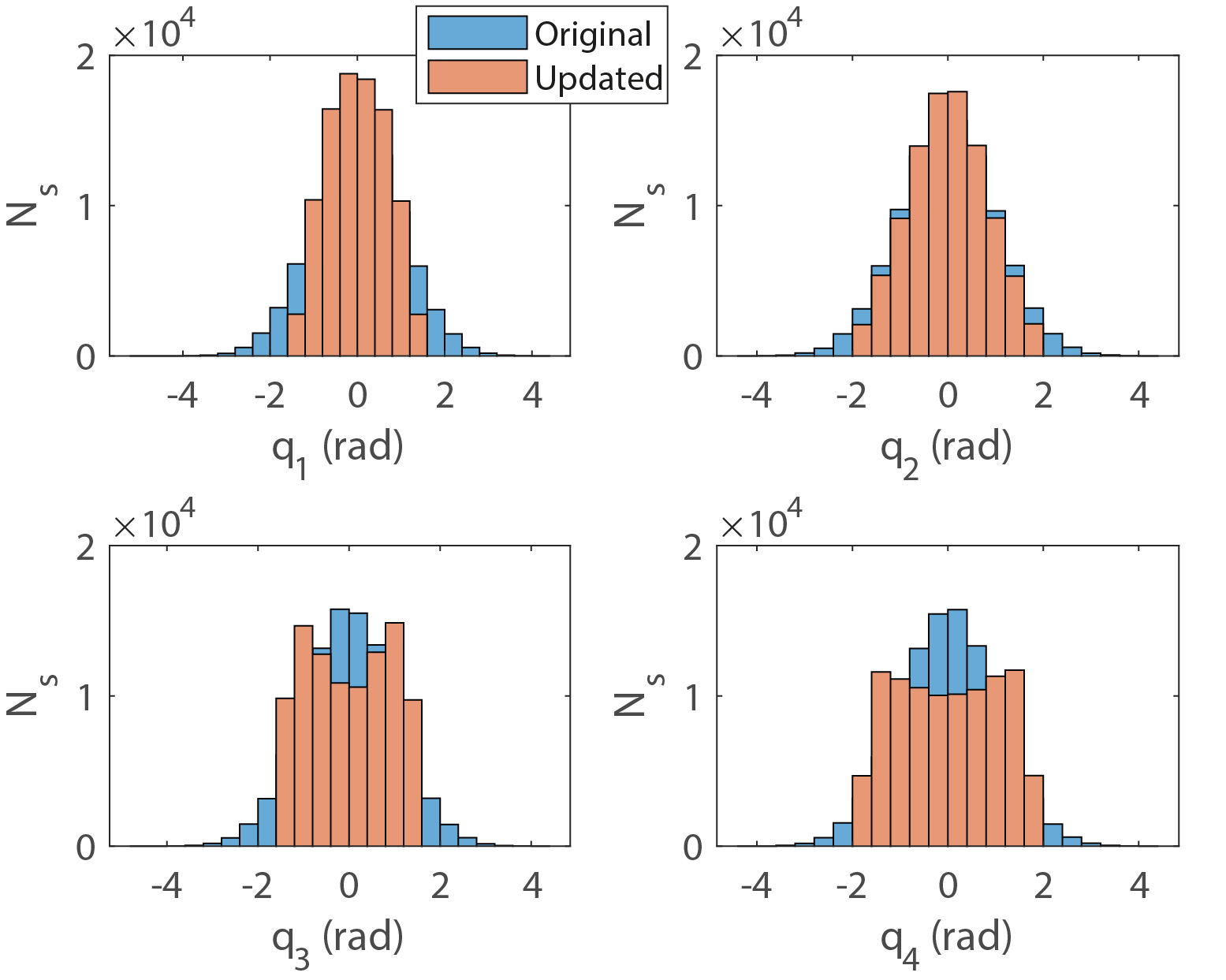}
\centerline{\footnotesize{(b)}}
\end{minipage}
\begin{minipage}[t]{0.26\linewidth}
\includegraphics[width=\linewidth,valign=c]{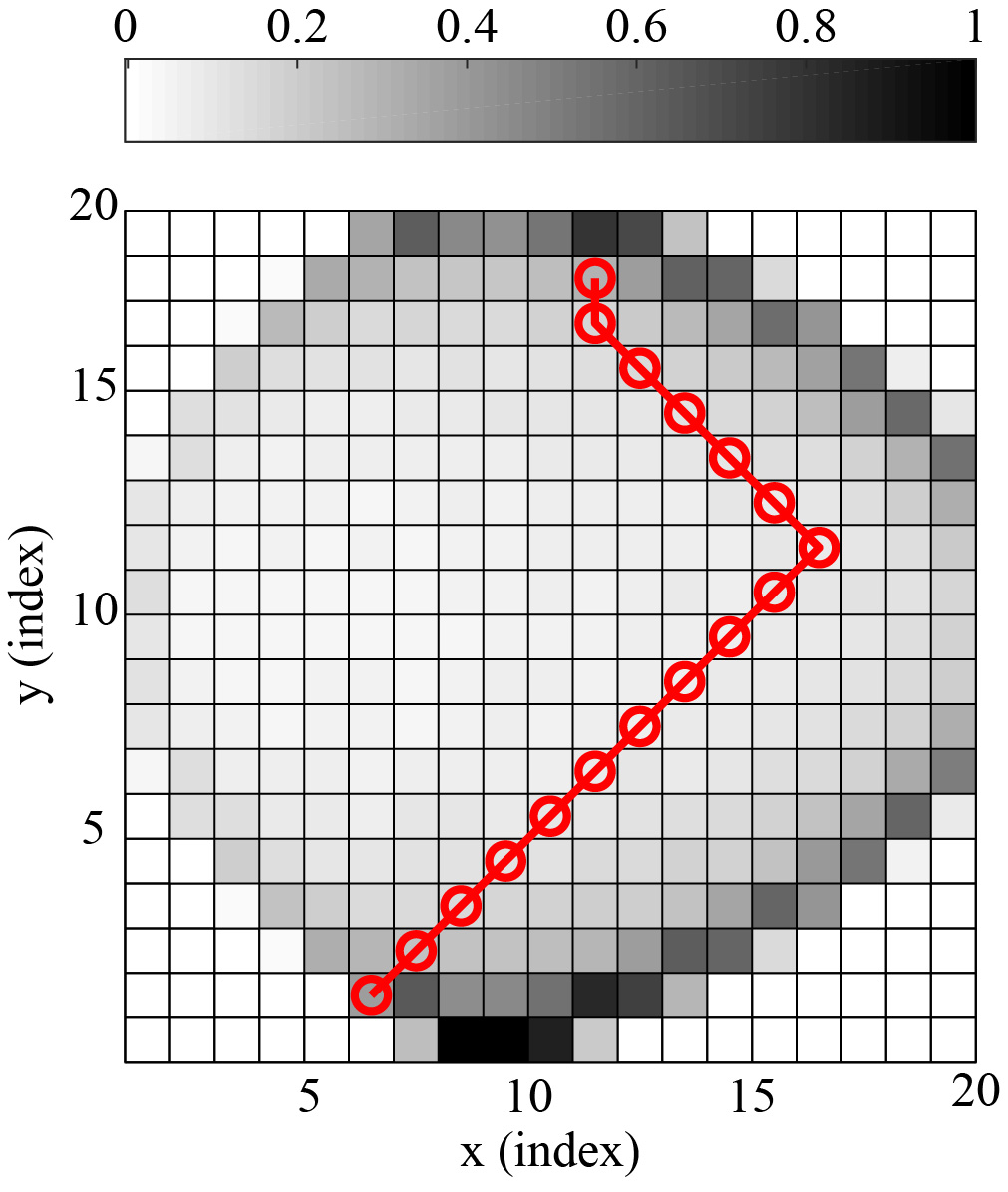}
\centerline{\footnotesize{(c)}}
\end{minipage}
\caption{(a) Conceptual robotic system, (b) histograms of the original samples and the samples updated to fulfill the constraints, including contact force constraints. $N_s$ is the number of samples in desired specific range, (c) is the color-map of the FRS $\mathfrak{F}_m$ and the solution of the first DP.}
\label{Fig1}
\end{figure*}

\begin{figure*}
\centering
\begin{minipage}[h]{0.32\linewidth}
\includegraphics[width=\linewidth,valign=c]{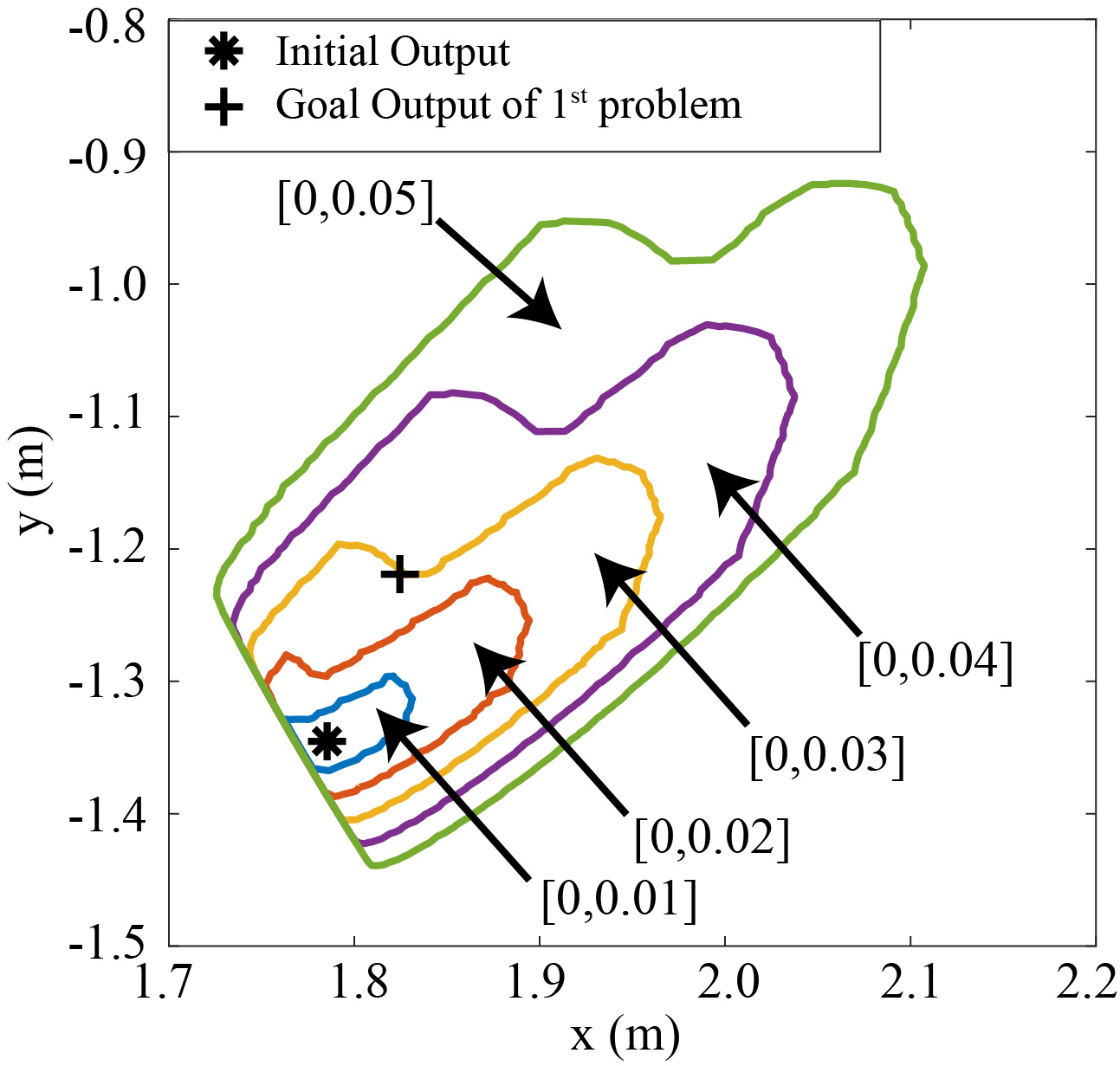}
\centerline{\footnotesize{(a)}}
\end{minipage}
\begin{minipage}[h]{0.32\linewidth}
\includegraphics[width=\linewidth,valign=c]{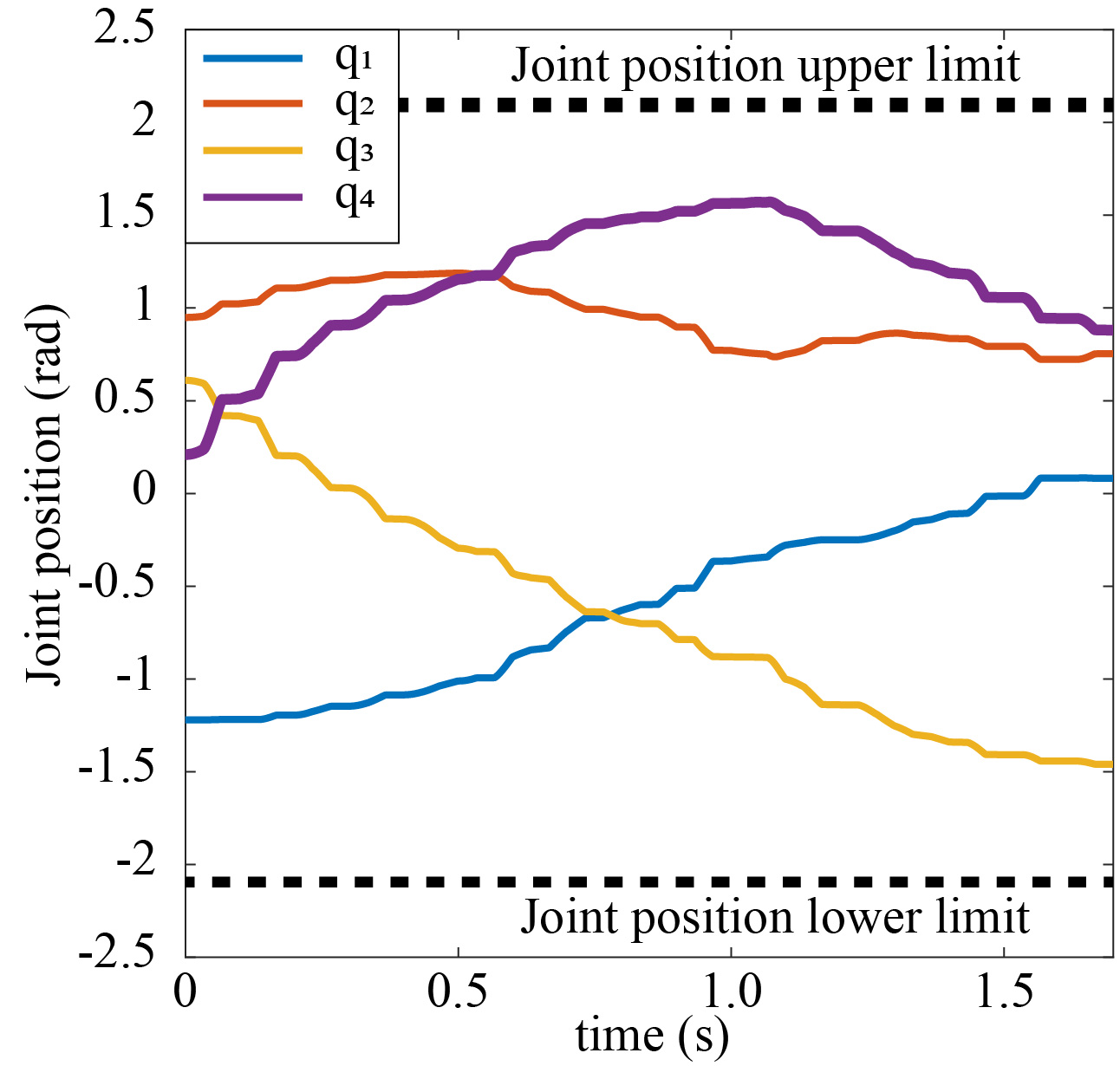}
\centerline{\footnotesize{(b)}}
\end{minipage}
\begin{minipage}[h]{0.32\linewidth}
\includegraphics[width=\linewidth,valign=c]{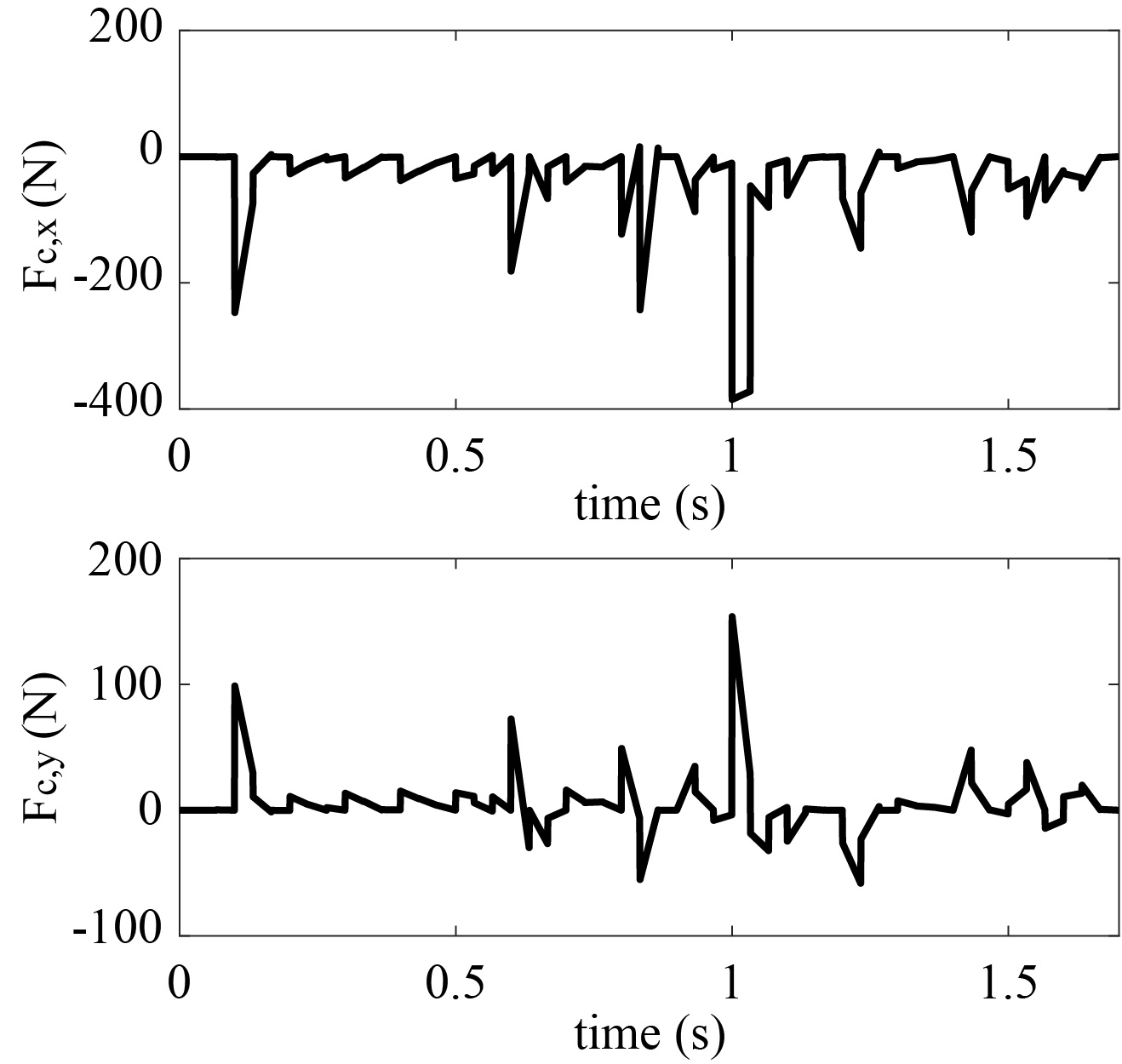}
\centerline{\footnotesize{(c)}}
\end{minipage}
\caption{(a) Non-convex hull of the reachable output set within the time interval: $\mathrm{Nconv}\left(\overline{\mathcal{R}}^{y}_{[0,T]}(x_0) \right)$, (b) joint position trajectory, (c) contact force}
\label{Fig2}
\end{figure*}

\section{SIMULATIONS}
\label{sec5}
In this simulation, we consider a 2-dimensional output space and a 4-DOF planar robot with a contact point at the end-effector as shown in Fig. \ref{Fig1}(a). For the simulation, we use the optimization toolbox of MATLAB and the tool DYNOPT \cite{cizniar2006matlab} on a laptop with an i7-8650U CPU and $16.0$ GB RAM.

\subsection{A Planar Robot with Contact}
A 4-DOF planar robot is set with parameters $M = [ 2.0, 1.5, 1.0, 1.0]$ kg and $L = [1.0, 1.5, 2.5, 1.0]$ m where $M$ and $L$ denote the masses and link lengths, respectively. It is assumed that each link's center of mass is located at the geometric center of the link. We consider joint position, velocity, and torque limits such that:
\begin{equation*}
\begin{split}
	-(2/3)\pi \leq q_{i} \leq (2/3)\pi \quad \forall i \in [1,4]_d \\   
	-(3/2)\pi \leq \dot{q}_{i} \leq (3/2)\pi \quad \forall i \in [1,4]_d \\
    -1000 \leq u_{i} \leq 1000 \quad \forall i \in [1, 4]_d\end{split}
\end{equation*}
where $q_{i}$, $\dot{q}_i$, and $u_{i}$ denote the $i$-th joint position, velocity, and torque input, respectively. We also consider contact geometry constraints for the end-effector. Specifically, the contact position and the orientation of the end-effector should be consistent, and its velocity should be zero. The contact force constraint is defined as follows:
\begin{equation*}
\begin{split}
    &-\mu |F_{c,x}| < F_{c,y} <\mu |F_{c,x}|, \quad F_{c,x} <0 \\
    &- L_c |F_{c,x}| \leq \tau_{c,z} \leq L_c |F_{c,x}| 
\end{split}
\end{equation*}
where $F_c = \left[F_{c,x}, F_{c,y}, \tau_{c,z} \right]^{\top} $ and $L_c$ denotes the moment arm of the contact link. The focus of the performance for this simulation is on the $2$nd link which is required to move while a solid contact should be maintained on the end-effector, as shown in Fig. \ref{Fig1}(a). The initial configuration, the joint velocity, and the goal output are $q_{0} = [-1.22, 0.949, 0.610, 0.210]$ rad, $\dot{q}_0 = [0.0, 0.0, 0.0]$ rad/s, and $\phi = [2.0, 1.2]$ m, respectively.

We then generate random samples. The threshold $\varepsilon_{G}$ used to determine the appropriate number of samples is set to be $6.0 \times 10^{-5}$. The Jacobian matrices of all the constraint functions are full row rank. As a result, the required number of samples is $2.0 \times 10^{5}$ to obtain the reliable FRS. The raw random samples and the modified samples, which fulfill the constraints, are represented by histograms as shown in Fig. \ref{Fig1}(b). Specifically, the updated samples satisfy not only joint position limits but also the contact geometric constraints. We next consider a $20 \times 20$ grid box to create trajectories using the DP process described in Section \ref{sec:DP}. As a result of the DP process, the trajectory generation problem over long-term interval is broken into $16$ short-term problems as shown Fig. \ref{Fig1}(c).

The reachable sets are numerically computed to check whether or not the goal outputs resulting from solving the DP problem are reachable via the proposed approach. Fig. \ref{Fig2}(a) describes the non-convex hull of the reachable output sets with respect to small time intervals $[0,0.01]$, $[0,0.02]$, $[0, 0.03]$, $[0, 0.04]$, and $[0, 0.05]$. The non-convex hull of the reachable output set, $\mathrm{Nconv}\left(\overline{\mathcal{R}}^{y}_{[0,0.03]}(x_0) \right) $, contains both the initial and goal output positions from the first trajectory generation problem. Therefore, we set the final time interval of the first nonlinear optimization as $0.1$ s $>$ $0.03$ s based on the reachable set of Fig. \ref{Fig2}(a). We repeat the process for the remaining problems. Fig. \ref{Fig2}(b) shows the generated joint position trajectory, and Fig. \ref{Fig2}(c) shows the corresponding contact force. The final configuration of the robot is $q_{f} = [0.084, 0.755, -1.460, 0.880]$ rad and the final output is $g(q_{f}) = [2.0, 1.2]$ m, which is identical to the desired goal output. Furthermore, all considered constraints, related to the joint position, joint velocity, and input limits and the contact geometry/force constraints, are fulfilled while achieving the desired goal.

\section{CONCLUSION AND FUTURE WORK}
This paper proposes an approach to generate feasible trajectories for robotic systems with contact force constraints. The proposed approach consists of a sampling-based method and two optimization processes to generate trajectories that maintain solid contacts in an effective way. Using properties from our sampling approach, the end-to-end trajectory generation problem over a long-term interval is replaced by multiple sub-problems over short-term intervals. This strategy also enables us to perform numerical reachability analysis for the finite time interval before implementing an optimal control process. The simulation results show that the proposed approach successfully generates a feasible trajectory with contact force constraints. 

In the near future, we will conduct an extended analysis of our method. Real experiments on a robot will be pursued to verify the scalability of our method. We will also apply our method to more complex systems such as dual-arm and bipedal robots.        

% \addtolength{\textheight}{-12cm}   % This command serves to balance the column lengths
                                  % on the last page of the document manually. It shortens
                                  % the textheight of the last page by a suitable amount.
                                  % This command does not take effect until the next page
                                  % so it should come on the page before the last. Make
                                  % sure that you do not shorten the textheight too much.

%%%%%%%%%%%%%%%%%%%%%%%%%%%%%%%%%%%%%%%%%%%%%%%%%%%%%%%%%%%%%%%%%%%%%%%%%%%%%%%%

%%%%%%%%%%%%%%%%%%%%%%%%%%%%%%%%%%%%%%%%%%%%%%%%%%%%%%%%%%%%%%%%%%%%%%%%%%%%%%%%

%%%%%%%%%%%%%%%%%%%%%%%%%%%%%%%%%%%%%%%%%%%%%%%%%%%%%%%%%%%%%%%%%%%%%%%%%%%%%%%%

\section*{ACKNOWLEDGMENTS}
The authors would like to thank the members of the Human Centered Robotics Laboratory at The University of Texas at Austin for their great help and support. This work was supported by an NSF Grant\# 1724360 and partially supported by an ONR Grant\# N000141512507.

\bibliographystyle{IEEEtran}
\bibliography{autosam}
%%%%%%%%%%%%%%%%%%%%%%%%%%%%%%%%%%%%%%%%%%%%%%%%%%%%%%%%%%%%%%%%%%%%%%%%%%%%%%%%

\end{document}